\def\BibTeX{{\rm B\kern-.05em{\sc i\kern-.025em b}\kern-.08em
				T\kern-.1667em\lower.7ex\hbox{E}\kern-.125emX}}
\begin{document}

\title{
Probabilistic completeness of RRT for geometric and kinodynamic planning\\ with forward propagation\\
{\LARGE Corrigendum}
}

\author{Michal Kleinbort$^{1}$, Kiril Solovey$^{2}$, Zakary Littlefield$^{3}$, Kostas E. Bekris$^{3}$, and Dan Halperin$^{1}$%
\thanks{
Work by D.H. and M.K. has been supported in part by the Israel Science
Foundation (grant nos.~825/15,1736/19),
by NSF/US-Israel-BSF (grant no.~2019754),
by the Israel Ministry of Science and Technology (grant no.~103129),
by the Blavatnik Computer Science Research Fund, and by
the Yandex Machine Learning Initiative for Machine Learning
at Tel Aviv University.
This paper was prepared while K.S.\ was a Ph.D.\ student in Tel Aviv Univesity, where he was supported by
    the Clore Israel Foundation. Z.L.\ and  K.B.\ were supported by NSF IIS 1617744 and CCF 1330789.} 
\thanks{$^{1}$ M.K.\ and D.H.\ are with the Blavatnik School of Computer Science, Tel-Aviv University, Israel}%
\thanks{$^{2}$ K.S.\ is with the Faculty of Electrical and Computer Engineering, Technion, Israel}%
\thanks{$^{3}$ Z.L.\ and K.B.\ are with the Depratment of Computer Science, Rutgers University, NJ~08854, USA}%
}

\maketitle

\def\frechet{Fr\'echet\xspace}

\newcommand{\cupdot}{\mathbin{\mathaccent\cdot\cup}}

\newcommand{\mtm}{\emph{multi-to-multi}\xspace}
\newcommand{\mts}{\emph{multi-to-single}\xspace}
\newcommand{\sts}{\emph{multi-to-single-restricted}\xspace}
\newcommand{\dtd}{\emph{single-to-single}\xspace}

\newcommand{\cte}{\emph{full-to-edge}\xspace}
\newcommand{\ctc}{\emph{full-to-full}\xspace}
\newcommand{\ete}{\emph{edge-to-edge}\xspace}

\newcommand{\AND}{{\sc and}\xspace}
\newcommand{\OR}{{\sc or}\xspace}

\newcommand{\ignore}[1]{}

\def\vor{\text{Vor}}

\def\P{\mathcal{P}} \def\C{\mathcal{C}} \def\H{\mathcal{H}}
\def\F{\mathcal{F}} \def\U{\mathcal{U}} \def\L{\mathcal{L}}
\def\O{\mathcal{O}} \def\I{\mathcal{I}} \def\S{\mathcal{S}}
\def\G{\mathcal{G}} \def\Q{\mathcal{Q}} \def\I{\mathcal{I}}
\def\T{\mathcal{T}} \def\L{\mathcal{L}} \def\N{\mathcal{N}}
\def\V{\mathcal{V}} \def\B{\mathcal{B}} \def\D{\mathcal{D}}
\def\W{\mathcal{W}} \def\R{\mathcal{R}} \def\M{\mathcal{M}}
\def\X{\mathcal{X}} \def\A{\mathcal{A}} \def\Y{\mathcal{Y}}
\def\L{\mathcal{L}}

\def\dS{\mathbb{S}} \def\dT{\mathbb{T}} \def\dC{\mathbb{C}}
\def\dG{\mathbb{G}} \def\dD{\mathbb{D}} \def\dV{\mathbb{V}}
\def\dH{\mathbb{H}} \def\dN{\mathbb{N}} \def\dE{\mathbb{E}}
\def\dR{\mathbb{R}} \def\dM{\mathbb{M}} \def\dm{\mathbb{m}}
\def\dB{\mathbb{B}} \def\dI{\mathbb{I}} \def\dM{\mathbb{M}}
\def\dZ{\mathbb{Z}}

\def\E{\mathbf{E}} 

\def\eps{\varepsilon}

\def\limn{\lim_{n\rightarrow \infty}}

\def\obs{\mathrm{obs}}
\newcommand{\defeq}{%
  \mathrel{\vbox{\offinterlineskip\ialign{%
    \hfil##\hfil\cr
    $\scriptscriptstyle\triangle$\cr
    $=$\cr
}}}}
\def\Int{\mathrm{Int}}

\def\Reals{\mathbb{R}}
\def\Naturals{\mathbb{N}}
\renewcommand{\leq}{\leqslant}
\renewcommand{\geq}{\geqslant}
\newcommand{\compl}{\mathrm{Compl}}

\newcommand{\sig}{\text{sig}}

\newcommand{\sbs}{sampling-based\xspace}
\newcommand{\mr}{multi-robot\xspace}
\newcommand{\mpl}{motion planning\xspace}
\newcommand{\mrmp}{multi-robot motion planning\xspace}
\newcommand{\sr}{single-robot\xspace}
\newcommand{\cs}{configuration space\xspace}
\newcommand{\conf}{configuration\xspace}
\newcommand{\confs}{configurations\xspace}

\newcommand{\stl}{\textsc{Stl}\xspace}
\newcommand{\boost}{\textsc{Boost}\xspace}
\newcommand{\core}{\textsc{Core}\xspace}
\newcommand{\leda}{\textsc{Leda}\xspace}
\newcommand{\cgal}{\textsc{Cgal}\xspace}
\newcommand{\qt}{\textsc{Qt}\xspace}
\newcommand{\gmp}{\textsc{Gmp}\xspace}

\newcommand{\Cpp}{C\raise.08ex\hbox{\tt ++}\xspace}

\def\concept#1{\textsf{\it #1}}
\def\ccode#1{{\texttt{#1}}}

\newcommand{\ch}{\mathrm{ch}}
\newcommand{\pspace}{{\sc pspace}\xspace}
\newcommand{\threesum}{{\sc 3Sum}\xspace}
\newcommand{\np}{{\sc np}\xspace}
\newcommand{\degree}{\ensuremath{^\circ}}
\newcommand{\argmin}{\operatornamewithlimits{argmin}}

\newcommand{\Gdisk}{\G^\textup{disk}}
\newcommand{\Gbt}{\G^\textup{BT}}
\newcommand{\Gsoft}{\G^\textup{soft}}
\newcommand{\Gnear}{\G^\textup{near}}
\newcommand{\Gembed}{\G^\textup{embed}}

\newcommand{\dist}{\textup{dist}}

\newcommand{\Cfree}{\C_{\textup{free}}}
\newcommand{\Cforb}{\C_{\textup{forb}}}

\newtheorem{lemma}{Lemma}
\newtheorem{theorem}{Theorem}
\newtheorem{corollary}{Corollary}
\newtheorem{claim}{Claim}
\newtheorem{proposition}{Proposition}

\theoremstyle{definition}
\newtheorem{definition}{Definition}
\newtheorem{remark}{Remark}
\theoremstyle{plain}
\newtheorem{observation}{Observation}

\def\len{c_\ell}
\def\bot{c_b}

\def\lenopt{\len^*}
\def\botopt{\bot^*}

\def\Im{\textup{Im}}

\def\rfunc{\left(\frac{\log n}{n}\right)^{1/d}}
\def\rfuncs{\left(\frac{\log n}{n}\right)^{1/d}}
\def\cfunc{\sqrt{\frac{\log n}{\log\log n}}}
\def\rtrs{\gamma\rfunc}
\def\ctrs{2\cfunc}
\def\aconn{\A_\textup{conn}}
\def\abd{\A_\textup{str}}
\def\aspan{\A_\textup{span}}
\def\aopt{\A_\textup{opt}}
\def\ao{\A_\textup{ao}}
\def\acfo{\A_\textup{acfo}}
\def\binomial{\textup{Binomial}}
\def\twin{\textup{twin}}

\def\aas{a.a.s.\xspace}
\def\0{\bm{0}}

\def\distU#1{\|#1\|_{\G_n}^U}
\def\distW#1{\|#1\|_{\G_n}^W}

\def\tooth{\scalerel*{\includegraphics{./../fig/tooth}}{b}}

\makeatletter
\def\thmhead@plain#1#2#3{%
  \thmname{#1}\thmnumber{\@ifnotempty{#1}{ }\@upn{#2}}%
  \thmnote{ {\the\thm@notefont#3}}}
\let\thmhead\thmhead@plain
\makeatother

\def\todo#1{\textcolor{blue}{\textbf{TODO:} #1}}
\def\new#1{\textcolor{magenta}{#1}}
\def\kiril#1{\textcolor{magenta}{\textbf{Kiril:} #1}}
\def\old#1{\textcolor{red}{#1}}
\def\michal#1{\textcolor{red}{\textbf{Michal:} #1}}
\def\zak#1{\textcolor{orange}{\textbf{Zak:} #1}}

\def\removed#1{\textcolor{green}{#1}}

\def\dx{\,\mathrm{d}x}
\def\dy{\,\mathrm{d}y}
\def\drho{\,\mathrm{d}\rho}

\newcommand{\prm}{{\tt PRM}\xspace}
\newcommand{\prmstar}{{\tt PRM}$^*$\xspace}
\newcommand{\rrt}{{\tt RRT}\xspace}
\newcommand{\est}{{\tt EST}\xspace}
\newcommand{\grrt}{{\tt GEOM-RRT}\xspace}
\newcommand{\rrtstar}{{\tt RRT}$^*$\xspace}
\newcommand{\rrg}{{\tt RRG}\xspace}
\newcommand{\btt}{{\tt BTT}\xspace}
\newcommand{\fmt}{{\tt FMT}$^*$\xspace}
\newcommand{\dfmt}{{\tt DFMT}$^*$\xspace}
\newcommand{\dprm}{{\tt DPRM}$^*$\xspace}
\newcommand{\mstar}{{\tt M}$^*$\xspace}
\newcommand{\drrtstar}{{\tt dRRT}$^*$\xspace}
\newcommand{\sst}{{\tt SST}\xspace}
\newcommand{\aorrt}{{\tt AO-RRT}\xspace}


\begin{abstract}
The Rapidly-exploring Random Tree~(RRT) algorithm has been one of the most prevalent and popular motion-planning techniques for two decades now. Surprisingly, in spite of its centrality, there has been an active debate under which conditions RRT is probabilistically complete. We provide two new proofs of probabilistic completeness (PC) of RRT with a reduced set of assumptions. The first one for the purely geometric setting, where we only require that the solution path has a certain clearance from the obstacles. For the kinodynamic case with forward propagation of random controls and duration, we only consider in addition mild Lipschitz-continuity conditions. These proofs fill a gap in the study of RRT itself. They also lay sound foundations for a variety of more recent and alternative sampling-based methods, whose PC property relies on that of RRT.

{Our original publication~\cite{KSLBH19} contains an error in the analysis of the case of the kinodynamic RRT. Here, we rectify the problem by modifying the proof of Theorem~\ref{kino_main_thm}, which, in particular, necessitated a revision of Lemma~\ref{fig:prop_lemma}.
Briefly, the original (and erroneous) proof of Theorem~\ref{kino_main_thm} used a sequence of equal-size balls. The correction uses a sequence of balls of increasing radii.
We emphasize that the correction is in Lemma~\ref{fig:prop_lemma} and the proof of Theorem~\ref{kino_main_thm} only. The main results remain unchanged.}
\end{abstract}

\begin{IEEEkeywords}
Motion and Path Planning, Nonholonomic Motion Planning
\end{IEEEkeywords}

\section{Introduction}
\IEEEPARstart{T}wo decades ago LaValle and Kuffner presented the \emph{Rapidly-exploring Random Tree} (\rrt)~\cite{LaVKuf01} method for sampling-based motion planning. Even though numerous alternatives for motion planning have been proposed since then, \rrt remains one of the most widely used techniques today.  This is due to its simplicity and practical efficiency, especially when combined with simple heuristics.

RRT is especially useful in single-query settings, as it focuses on finding a single trajectory moving a robot from an initial state to a goal state (or region), rather than exploring the full state space of the problem, as roadmap methods do, such as \prm~\cite{KavETAL96}. To achieve this objective, \rrt grows a tree, rooted at an initial state, which is periodically extended towards random state samples until the goal is reached.

Notably, \rrt is well suited to complex motion planning tasks and, in particular, problems involving kinodynamic constraints. This is due to the fact that \rrt can be implemented
without a steering function, which is difficult to obtain for many systems with complex dynamics. (This function returns a path between two states in the absence of obstacles. It corresponds to solving a two-point boundary value problem (BVP), which may be a difficult task for many dynamical systems.) Moreover, \rrt has low dependence on parameters and is easily extendable to a variety of domains (e.g., {\tt graspRRT} for integrated motion and grasp planning~\cite{VDAD10}).

Since its introduction, numerous variations and extensions of \rrt have been proposed (see, e.g.,~\cite{JailletCS10,YershovaJSL05,ZuckerETAL07,WangETAL10,KL2000_geom}), to allow improved performance.  While \rrt is not asymptotically optimal (AO) and provably does not converge to the optimal solution \cite{NecETAL10,KF11}, it forms the basis of many AO planners, including \rrtstar and \rrg~\cite{KF11}. In particular, the probabilistic completeness (PC) of  most of the aforementioned \rrt-based algorithms is derived from the PC properties of \rrt. 

Surprisingly, it is not completely obvious under what conditions \rrt is probabilistically complete, especially when using forward propagation of controls for the kinodynamic case. Indeed there has been some debate on this issue in the literature~\cite{KunzS14,CaronETAL17}. This paper aims to address this gap. 

\subsection{Contribution}
We provide two new proofs of PC of \rrt. The first one for the purely geometric setting, where we only require that the solution path has a certain clearance from the obstacles. For the kinodynamic case with forward propagation of random controls and duration, we add mild Lipschitz-continuity conditions. This line of work lays sound foundations for arguing the probabilistic completeness of the variety of methods whose PC relies on that of  \rrt.

Section~\ref{sec:related_work} describes related work and  Section~\ref{sec:geometric_proof} proceeds with the probabilistic completeness proof for the geometric case. Section~\ref{sec:kinodyn} gives a proof for the kinodynamic setting.
A discussion on further research appears in Section~\ref{sec:future}.

\section{Related work}
\label{sec:related_work}
Sampling-based algorithms are among the state-of-the-art alternatives for robot motion planning. Since their introduction in the mid 90's (e.g., \prm, \est~\cite{HLM97} and \rrt), they have been used in numerous robotic tasks. Sampling-based motion planners are also widely used in various fields other than robotics, such as computational biology and digital animation. There are recent reviews that provide a comprehensive coverage of developments in sampling-based motion planning~\cite{ES14, CRCbookChap51}.

Sampling-based planners can potentially provide the following two desirable properties; (i) \emph{probabilistic completeness (PC)} and (ii) \emph{Asymptotic (near)-optimality (AO)}. The former implies that the probability that the planner will return a solution (if one exists) approaches one as the number of samples tends to infinity. AO is a stronger property, as it implies that the cost of the solution returned (if one exists) by the planning algorithm (nearly) approaches the cost of the optimal solution as the number of samples tends to infinity.

AO variants of \rrt and \prm, i.e., the \rrtstar and \prmstar methods, have been introduced more recently~\cite{KF11}.  The same line of work introduced another AO planning algorithm, \rrg, which constructs a connected \prm-like roadmap in a single-query setting. Interestingly, the PC property of both \rrtstar and \rrg relies entirely on the PC property of \rrt. Since then, many variants of \rrtstar and \rrg have been devised~\cite{GamETAL18, SalH16, ArslanT13, Naderi15, OtteF16, DevETAL16}, most of which inherit their PC and AO properties from \rrg and  \rrtstar. A different series of planners implicitly maintain a \prm structure to guarantee AO planning~\cite{JSCP15,ManETAL18,SolHal17,GamETAL15}. A recent paper develops precise conditions for \prm-based planners (in terms of the connection radius used) to guarantee AO~\cite{SK18}.

Although \rrtstar, \prmstar, and their extensions, were initially developed to deal with geometric planning, they can be extended to kinodynamic planning. This requires proper adjustments to the algorithms and the proofs~(see, e.g.,~\cite{SJP_ICRA15,SJP15,KarFra10, KarFra13,WebBer13,PerETAL12,XieETAL15,GorETAL13}). Nevertheless, these approaches require the use of a steering function, which limits their application to systems for which such a function is readily available. Recent work proposes a different type of approach, called \sst, that employs only forward propagation~\cite{LiETAL16} and achieves asymptotic near-optimality. Hauser and Zhou propose a simple yet effective approach termed \aorrt, which employs a forward-propagating \rrt as a black-box component~\cite{HauserZ16}, to achieve AO.

\subsection{PC of Kinodynamic \rrt}
LaValle and Kuffner discuss completeness of \rrt in kinodynamic setting in one of the early works on the subject~\cite{LaVKuf01}. While this work provides strong evidence for the PC of \rrt, it only derives a proof sketch that does not fully addresses many of the complications that arise in analyzing sampling-based planners, be it a geometric~\cite{KL2000_geom} or kinodynamic setting. For instance, the proofs in that paper assume the existence of ``attraction sequences'' and ``basin regions'', whose purpose is to lead the growth of the \rrt tree toward the goal. It is not clear, however, whether such regions exist at all and for what types of robotic systems. It is also not clear whether the number of such regions is finite, and whether it is possible to produce samples in such regions with positive probability.  Similar concerns were expressed by Caron et al.~\cite{CaronETAL17}.

Indeed, in 2014, Kunz and Stilman~\cite{KunzS14} showed that one of the variants of \rrt mentioned in the original \rrt paper~\cite{LaVKuf01} is in fact not PC.
{In particular, they consider \rrt which employs a fixed time step (rather than random propagation time which we use here) and a best-control input strategy, which picks the control input that yields the nearest state to the random sample. For this setting they describe a counterexample consisting of a specific robotic system for which \rrt will have a success rate of~$0$. The reason being that the state space reachable by this type of \rrt is a strict subset of the actual reachable space of the robotic system.} Completeness of the other variants was left as an open question. 

PC proofs of \rrt under different steering functions and robot systems were presented in~\cite{CaronETAL17} and~\cite{KunzS14b}. 
{Specifically, Caron et al.~\cite{CaronETAL17} consider state-based steering, which is different than forward propagation of random controls that we consider here. }
A setting similar to ours of random forward propagation was considered in~\cite{LiETAL16} and~\cite{PapaETAL14}. It should be noted, however, that both papers consider a random-tree planner (and its extensions), which selects the next vertex to expand in a uniform and random manner among all its vertices, unlike \rrt which expands the nearest neighbor toward a random sample point. Interestingly, the random tree is AO, in contrast to \rrt which is not AO~\cite{KF11, NecETAL10}. Nevertheless, the selection process employed by \rrt allows it to quickly explore the underlying state space when endowed with an appropriate metric.

\section{Probabilistic completeness of \rrt: The geometric case}
\label{sec:geometric_proof}
We start by defining useful notation in Subsection~\ref{ssec:geom_prelim} and then proceed to describe \rrt for the geometric case. Then, in Subsection~\ref{ssec:geom_pc_proof}, we provide the PC proof.  We call the algorithm in this section \grrt to distinguish from the kinodynamic version. The geometric case, where a steering function exists and the dimension of the control space is identical to the dimension of the state space, can be considered as a special case of the kinodynamic setting. Thus, this section can be viewed as an introduction to the more involved kinodynamic setting, which is analyzed in the following section. 

\subsection{Preliminaries}
\label{ssec:geom_prelim}
Let $\X$ be the state space, which is assumed to be $[0,1]^d$ (a $d$-dimensional Euclidean hypercube), equipped with the standard Euclidean distance metric, whose norm we denote by $\|\cdot\|$. The free space is denoted by $\F\subseteq\X$.
Given a subset $D\subseteq \X$ we denote by $|D|$ its Lebesgue measure.
We will use $\B_r(x)$ to denote the ball 
of radius $r$ centered at $x\in \dR^d$.
Let $x_{\text{init}}\in \F$ denote the start state, and let $\X_{\text{goal}}$ be an open subset of $\F$
denoting the goal region. For simplicity, we assume that there exist $\delta_{\text{goal}}>0,x_{\text{goal}}\in \X_{\text{goal}}$, such that $\X_{\text{goal}}=\B_{\delta_{\text{goal}}}(x_{\text{goal}})$.

A motion-planning problem is implicitly defined by the triplet $(\F,x_{\text{init}},\X_{\text{goal}})$. A solution to such a problem is a trajectory that moves the robot from the initial state to the goal region while avoiding collisions with obstacles. More formally, a valid trajectory is a continuous map $\pi : [0,t_\pi] \rightarrow \F$, such that $\pi(0) = x_{\text{init}}$ and $\pi(t_\pi)\in \X_{\text{goal}}$. The clearance of $\pi$ is the maximal $\delta_{\text{clear}}$, such that $\B_{\delta_{\text{clear}}}(\pi(t))\subseteq\F$ for all $t\in [0,t_\pi]$. We require that $\delta_{\text{clear}}>0$.

We describe in Algorithm~\ref{algRRTextend} the (geometric) \rrt algorithm, \grrt, based on~\cite{KL2000_geom}.
The input for \grrt consists of an initial configuration $x_{\text{init}}$, goal region $\X_{\text{goal}}$, number of iterations $k$, and a steering parameter $\eta>0$ used by the algorithm.  \grrt constructs a tree $\T$ by preforming $k$ iterations of the following form. In each iteration, a new random sample $x_{\text{rand}}$ is returned from $\X$ uniformly by calling RANDOM\_STATE. Then, the vertex $x_{\text{near}}\in\T$ that is nearest (according to $\|\cdot\|$) to $x_{\text{rand}}$ is found using NEAREST\_NEIGHBOR. A new configuration $x_{\text{new}}\in\X$ is then returned by NEW\_STATE, such that $x_{\text{new}}$ is on the line segment between $x_{\text{near}}$ and $x_{\text{rand}}$ and the distance $\Vert x_{\text{near}} - x_{\text{new}}\Vert$ is at most $\eta$. Finally, COLLISION\_FREE($x_{\text{near}}, x_{\text{new}}$) checks whether the path from $x_{\text{near}}$ to $x_{\text{new}}$ is collision free. If so, $x_{\text{new}}$ is added as a vertex to $\T$ and is connected by an edge from $x_{\text{near}}$. 

\begin{algorithm}
    \caption{\tt GEOM-RRT($x_{\text{init}}, \X_{\text{goal}}, k, \eta$)}
	\label{algRRTextend}
	\begin{algorithmic}[1]
		\State{$\mathcal{T}.\text{init}(x_{\text{init}})$}
		\For {$i = 1 \text{ to } k$}
		\State $x_{\text{rand}}\gets$ RANDOM\_STATE()
		\State $x_{\text{near}}\gets \text{NEAREST\_NEIGHBOR}(x_{\text{rand}}, \mathcal{T} )$
		\State  $x_{\text{new}} \gets$ NEW\_STATE($x_{\text{rand}}, x_{\text{near}}, \eta$)
		\If {COLLISION\_FREE($x_{\text{near}}, x_{\text{new}}$)} 
		\State{$\mathcal{T}.$add\_vertex($x_{\text{new}}$)}
		\State{$\mathcal{T}.$add\_edge($x_{\text{near}}, x_{\text{new}}$)}
		\EndIf
		\EndFor	
		\State 		\Return $\mathcal{T}$
	\end{algorithmic}
\end{algorithm}
To retrieve a trajectory for the robot, the single path in $\T$ from the root state $x_{\text{init}}$ to the goal is found.  It can then be translated to a feasible, collision-free trajectory for the robot by tracing the configurations along this path. 

\subsection{Probabilistic completeness proof}
\label{ssec:geom_pc_proof}
Next we devise a PC proof for \grrt. Throughout this section we will assume that there exists a valid trajectory $\pi:[0,t_{\pi}]\rightarrow \F$ with clearance $\delta_{\text{clear}}>0$. Without loss of generality, assume that $\pi(t_{\pi})=x_{\text{goal}}$, i.e., the trajectory terminates at the center of the goal region. Denote by~$L$ the (Euclidean) length of  $\pi$. Also, let $\delta:=\min\{\delta_{\text{clear}},\delta_{\text{goal}}\}$.

Let $m=\frac{5L}{\nu}$, where $\nu = \min(\delta,\eta)$, and $\eta$ is the steering parameter of \grrt. Then, define a sequence of $m+1$ points $x_0=x_{\text{init}},\ldots,x_{m}=x_{\text{goal}}$ along $\pi$, such that the length of the sub-path between every two consecutive points is $\nu/5$. Therefore, $\Vert x_i-x_{i+1}\Vert\leq\nu/5$ for every $0\leq i< m$. Next, we define a set of $m+1$ balls of radius $\nu/5$, centered at these points, and prove that with high probability \grrt will generate a path that goes through these balls.

We start by proving Lemma~\ref{lem:geometric_step}, which will be used in the proof of Theorem~\ref{thm_main} and specifies a condition for successfully extending the tree to the goal. 

	\begin{figure}
	    \includegraphics[width=0.9\columnwidth]{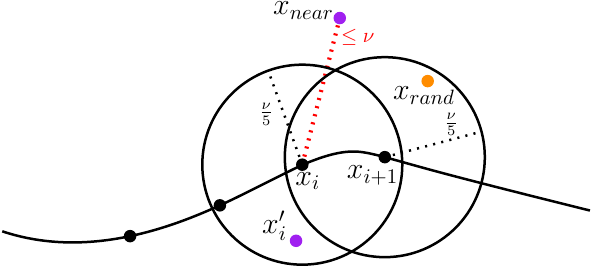}
		\caption{\textsf{Illustration of the proof of Lemma~\ref{lem:geometric_step}.}}
		\label{fig:geom_step}
	\end{figure}

\begin{lemma}
Suppose that \grrt  has reached $\B_{\nu/5}(x_i)$, that is, $\T$ contains a vertex $x'_i$ such that 	$x'_{i}\in \B_{\nu/5}(x_i)$. If a new sample $x_{\text{rand}}$ is drawn such that $x_{\text{rand}}\in \B_{\nu/5}(x_{i+1})$, then the straight line segment between $x_{\text{rand}}$ and its nearest neighbor $x_{\text{near}}$ in $\T$ lies entirely in $\F$.	
\label{lem:geometric_step}
\end{lemma}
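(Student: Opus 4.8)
The plan is to establish the claim using only the triangle inequality together with the clearance property of $\pi$. The one subtlety is that the nearest neighbor $x_{\text{near}}$ need not coincide with the vertex $x'_i$ promised by the hypothesis; however, since $x_{\text{near}}$ minimizes the distance to $x_{\text{rand}}$ over all vertices of $\T$ and $x'_i$ is such a vertex, we automatically have $\Vert x_{\text{rand}} - x_{\text{near}}\Vert \le \Vert x_{\text{rand}} - x'_i\Vert$. So the whole argument reduces to (a) bounding $\Vert x_{\text{rand}} - x'_i\Vert$, and (b) showing that every segment of that length starting at $x_{\text{rand}}$ remains inside the clearance tube of $\pi$.

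For step (a), I would chain the triangle inequality through $x_{i+1}$ and $x_i$: using $x_{\text{rand}}\in\B_{\nu/5}(x_{i+1})$, the spacing bound $\Vert x_i - x_{i+1}\Vert \le \nu/5$ that comes from the construction of $x_0,\dots,x_m$ along $\pi$, and $x'_i\in\B_{\nu/5}(x_i)$, this yields
\[
\Vert x_{\text{rand}} - x'_i\Vert \;\le\; \Vert x_{\text{rand}} - x_{i+1}\Vert + \Vert x_{i+1} - x_i\Vert + \Vert x_i - x'_i\Vert \;\le\; \tfrac{3\nu}{5},
\]
and hence $\Vert x_{\text{rand}} - x_{\text{near}}\Vert \le \tfrac{3\nu}{5}$ by the nearest-neighbor inequality noted above.

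For step (b), I would take an arbitrary point $p$ on the segment $[x_{\text{near}}, x_{\text{rand}}]$. Since $p$ lies between the two endpoints, $\Vert p - x_{\text{rand}}\Vert \le \Vert x_{\text{near}} - x_{\text{rand}}\Vert \le \tfrac{3\nu}{5}$, and therefore
\[
\Vert p - x_{i+1}\Vert \;\le\; \Vert p - x_{\text{rand}}\Vert + \Vert x_{\text{rand}} - x_{i+1}\Vert \;\le\; \tfrac{3\nu}{5} + \tfrac{\nu}{5} \;=\; \tfrac{4\nu}{5} \;<\; \delta \;\le\; \delta_{\text{clear}},
\]
where I used $\nu=\min(\delta,\eta)\le\delta$ for the strict inequality. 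Because $x_{i+1}$ lies on $\pi$ — it equals $\pi(t)$ for the appropriate parameter $t\in[0,t_\pi]$ — the definition of clearance gives $\B_{\delta_{\text{clear}}}(x_{i+1})\subseteq\F$, so $p\in\F$. As $p$ was arbitrary, the entire segment between $x_{\text{rand}}$ and $x_{\text{near}}$ lies in $\F$, which is exactly the assertion of the lemma.

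I do not expect a genuine obstacle here — it is essentially a bookkeeping argument. The only point requiring care is that the constant $1/5$ chosen when placing the waypoints is precisely what makes $\tfrac{4\nu}{5}<\delta$, i.e.\ what keeps the extension segment strictly inside the clearance tube; the companion estimate $\tfrac{3\nu}{5}\le\nu\le\eta$ will later be reused in the proof of Theorem~\ref{thm_main} to ensure that the vertex $x_{\text{new}}$ returned by NEW\_STATE actually lands in $\B_{\nu/5}(x_{i+1})$, so that the induction along the waypoints can proceed.
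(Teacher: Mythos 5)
Your proof is correct and follows essentially the same route as the paper: the nearest-neighbor inequality $\Vert x_{\text{rand}}-x_{\text{near}}\Vert\le\Vert x_{\text{rand}}-x'_i\Vert$, triangle-inequality bookkeeping with the $\nu/5$ constants, and containment of the whole segment in a single clearance ball around a waypoint of $\pi$ (you center a ball of radius $4\nu/5$ at $x_{i+1}$, the paper a ball of radius $\nu$ at $x_i$ — an immaterial difference). Your closing observation that $\Vert x_{\text{rand}}-x_{\text{near}}\Vert\le 3\nu/5\le\eta$ forces $x_{\text{new}}=x_{\text{rand}}$ is likewise made explicitly at the end of the paper's proof.
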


\begin{proof}
Denote by $x_{\text{near}}$ the nearest neighbor of $x_{\text{rand}}$ among the \rrt vertices. See Figure~\ref{fig:geom_step} for an illustration. Then, from the definition of $x_{\text{near}}$, it follows that $\Vert x_{\text{near}} - x_{\text{rand}}\Vert \leq \Vert  x'_i - x_{\text{rand}} \Vert$, where $x'_{i}\in \B_{\nu/5}(x_i)$.

We show that $x_{\text{near}}$ must lie in $\B_\nu(x_i)$, implying that $\overline{x_{\text{near}}x_{\text{rand}}}\subset\F$, as $x_{rand}\in \B_{\nu/5}(x_{i+1}) \subset \B_{\nu}(x_{i})$. From $\Vert x_{\text{near}} - x_{\text{rand}}\Vert \leq \Vert  x'_i - x_{\text{rand}} \Vert$ and the triangle inequality, we have:
\begin{align*}
	\Vert  x_{\text{near}} - x_i \Vert &\leq 
	\Vert  x_{\text{near}}- x_{\text{rand}} \Vert + \Vert  x_{\text{rand}} - x_i \Vert 
	\\
	&\leq 
	\Vert  x'_i - x_{\text{rand}} \Vert + \Vert  x_{\text{rand}} - x_i \Vert .
	\end{align*}
	From the triangle inequality, we have that 
	\[\Vert x_{\text{rand}} - x_i\Vert \leq \Vert  x_{\text{rand}} - x_{i+1} \Vert + \Vert  x_{i+1} - x_{i} \Vert,\]
	\[\Vert  x'_i - x_{\text{rand}} \Vert \leq \Vert  x'_i - x_{i} \Vert + \Vert  x_{i} - x_{i+1} \Vert +
		\Vert  x_{i+1} - x_{\text{rand}} \Vert.
	\]
	Therefore:
	\begin{align*}
	\Vert  x_{\text{near}} - x_i \Vert &\leq 
	\Vert x'_i - x_i \Vert + 2\Vert x_{i+1} - x_{\text{rand}}\Vert + \\
	& 2\Vert x_{i+1}-x_{i}\Vert \leq 5\frac{\nu}{5} = \nu.
	\end{align*}
	Hence, $x_{\text{near}}\in\B_\nu(x_{i})\subseteq\F$ and thus $\overline{x_{\text{near}}x_{\text{rand}}}\subset\F$. 
	
	Note that $\Vert x_{\text{near}} - x_{\text{rand}} \Vert \leq \eta$, since: 
	$\Vert x_{\text{rand}} - x_{\text{near}} \Vert \leq \Vert x_{\text{rand}} - x'_i \Vert
\leq \Vert x_i' - x_i \Vert +  \Vert x_i - x_{i+1} \Vert  + \Vert x_{i+1} - x_{\text{rand}} \Vert \leq 3\cdot\frac{\nu}{5} < \nu \leq \eta.$ 
The fact that $\Vert x_{\text{near}} - x_{\text{rand}} \Vert \leq \eta$, means that $x_{\text{new}} = x_{\text{rand}}$.

\end{proof}

We now prove our main theorem.
\begin{theorem}
	The probability that \grrt fails to reach $\X_{\text{goal}}$ from $x_{\text{init}}$ after $k$ iterations
	is at most $ae^{-bk}$, for some constants $a,b \in \dR_{>0}$.
	\label{thm_main}
\end{theorem}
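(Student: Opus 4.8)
The plan is to track the progress of the tree $\T$ along the chain of balls $\B_{\nu/5}(x_0),\ldots,\B_{\nu/5}(x_m)$ introduced above, show that every iteration advances this progress with a fixed positive probability, and conclude that failure after $k$ iterations is a Binomial lower-tail event.

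First I would record the two boundary facts. Since $x_0=x_{\text{init}}$, the tree always contains a vertex in $\B_{\nu/5}(x_0)$; and since $\nu/5\le\delta/5<\delta_{\text{goal}}$ and $x_m=x_{\text{goal}}$, any vertex inside $\B_{\nu/5}(x_m)$ already lies in $\X_{\text{goal}}$. Define the \emph{stage} after a given iteration as the largest index $j$ for which \grrt has placed a vertex in $\B_{\nu/5}(x_j)$, starting from stage $0$. The stage is non-decreasing, since \grrt never deletes vertices. The crucial step is Lemma~\ref{lem:geometric_step}: if the current stage is $j<m$ and the next sample $x_{\text{rand}}$ falls in $\B_{\nu/5}(x_{j+1})$, then the lemma guarantees $\overline{x_{\text{near}}x_{\text{rand}}}\subset\F$ and $x_{\text{new}}=x_{\text{rand}}$, so \grrt adds the vertex $x_{\text{rand}}\in\B_{\nu/5}(x_{j+1})$ and the stage becomes at least $j+1$.

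Next I would bound the probability of such an advance. The sample $x_{\text{rand}}$ is uniform in $\X=[0,1]^d$, and the clearance assumption gives $\B_{\nu/5}(x_{j+1})\subseteq\B_{\delta_{\text{clear}}}(x_{j+1})\subseteq\F\subseteq\X$ for all $j+1\le m$ (here $\nu/5\le\delta_{\text{clear}}$ and $x_{j+1}$ lies on $\pi$). Hence, conditioned on \emph{any} history with current stage $j<m$, the probability that the next sample lands in $\B_{\nu/5}(x_{j+1})$ equals $p:=|\B_{\nu/5}(x_{j+1})|=\zeta_d(\nu/5)^d$, where $\zeta_d$ is the volume of the unit ball in $\dR^d$; this $p>0$ is a constant, independent of $j$ and of the past. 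I would then couple the run with i.i.d.\ Bernoulli$(p)$ variables $W_1,\ldots,W_k$ so that, as long as the stage is below $m$, $W_i=1$ exactly when the sample of iteration $i$ lands in $\B_{\nu/5}(x_{j+1})$, $j$ being the stage at the start of that iteration. Since the stage starts at $0$, increases by at least one whenever such a hit occurs, and \grrt fails to reach $\X_{\text{goal}}$ only if the stage never attains $m$, failure implies $\sum_{i=1}^k W_i<m$; therefore the failure probability is at most $\Pr[\binomial(k,p)<m]$. A standard Chernoff bound for the lower tail of a binomial then gives $\Pr[\binomial(k,p)<m]\le e^{-kp/8}$ for $k\ge 2m/p$, while for the finitely many $k<2m/p$ we use the trivial bound $\Pr[\text{fail}]\le1\le e^{m/4}e^{-kp/8}$. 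Taking $a=e^{m/4}$ and $b=p/8$ --- quantities determined by $d$, $L$, $\delta$ and $\eta$ via $\nu$ and $m=5L/\nu$ --- gives the claimed $ae^{-bk}$ for every $k$.

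The main obstacle I anticipate is making this coupling precise: the ball whose hit counts as ``progress'' depends on the random history, yet its hitting probability is the fixed constant $p$ regardless of that history, and only $m$ successes --- not $m$ consecutive ones --- among the $k$ trials are needed, thanks to monotonicity of the stage. Formalising this, via a filtration / stopping-time argument or an explicit coupling with the auxiliary Bernoulli sequence, is the one delicate point; the geometry is already handled by Lemma~\ref{lem:geometric_step}, and the tail estimate is textbook material.
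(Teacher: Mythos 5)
Your proposal follows the paper's own argument essentially step for step: the same chain of $\nu/5$-balls along $\pi$, the same use of Lemma~\ref{lem:geometric_step} to guarantee that a sample landing in the next ball advances the tree, and the same reduction of the failure event to the lower tail of a $\binomial(k,p)$ variable with $p=|\B_{\nu/5}|$. The only difference is in the finish: you apply a Chernoff bound (and are somewhat more explicit about the history-dependent coupling), which yields the $ae^{-bk}$ form directly, whereas the paper bounds the binomial tail by hand to get $\frac{m}{(m-1)!}k^m e^{-pk}$ and then observes that this decays exponentially.
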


\begin{proof}

Assume that $\B_{\nu/5}(x_{i})$ already contains an \rrt vertex.  Let $p$ be the probability that in the next iteration an \rrt vertex will be added to $\B_{\nu/5}(x_{i+1})$. Recall that due to Lemma~\ref{lem:geometric_step}, $x_{\text{rand}}\in \B_{\nu/5}(x_{i+1})$ ensures that \rrt will reach $\B_{\nu/5}(x_{i+1})$.  Since at each iteration $i$ we draw $x_{\text{rand}}$ uniformly at random from~$[0,1]^d$, the probability $p$ that this sample falls inside $\B_{\nu/5}(x_{i+1})$ is equal to $|\B_{\nu/5}|/|[0,1]^d|=|\B_{\nu/5}|$.

	\begin{figure}[h]
	    \includegraphics[width=\columnwidth]{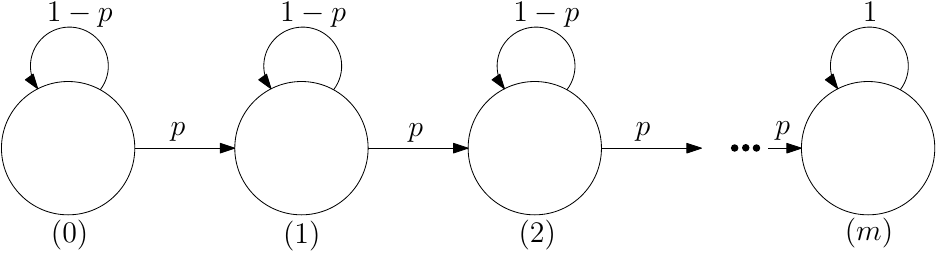}
		\caption{\textsf{A Markov chain where the success probability $p= |\B_{\nu/5}|$
				is the probability to uniformly sample from a specific ball of radius $\nu/5$.
				State $(m)$ is a terminal state.
				$m$ successful outcomes imply that the algorithm finds a path from initial state to goal, where the $i$th successful
				outcome switches from state $i$ to state $i+1$.}}
		\label{fig:markov}
	\end{figure}

In order for  \grrt to reach $\X_{\text{goal}}$ from $x_{\text{init}}$ we need to repeat this step $m$ times from $x_i$ to $x_{i+1}$ for $0\leq i < m$. This stochastic process can be viewed as a Markov chain (see Figure~\ref{fig:markov}).
Alternatively, this process can be described as $k$ Bernoulli trials with success probability $p$. The  planning problem can be solved after $m$ successful outcomes (the $i$th outcome adds an \rrt vertex in $\B_{\nu/5}(x_i)$). Note that it is possible that the process ends after less than $m$ successful outcomes, i.e., by defining success to be $m$ successful outcomes we obtain an upper bound on the probability of failure.

	Next, we bound the probability of failure, that is, the probability that the process does not reach state $(m)$,
	after $k$ steps. Let $X_k$ denote the number of successes in $k$ trials, then
	\begin{align}
		\Pr[&X_k < m] = \sum_{i=0}^{m-1}{\binom{k}{i}p^i(1-p)^{k-i}}\nonumber\\
		&\leq \sum_{i=0}^{m-1}{\binom{k}{m-1}p^i(1-p)^{k-i}}\nonumber \\
		&\leq \binom{k}{m-1}\sum_{i=0}^{m-1}{(1-p)^{k}} \nonumber \\
		&\leq \binom{k}{m-1}\sum_{i=0}^{m-1}{(e^{-p})^{k}}  
		= \binom{k}{m-1}m{e^{-pk}} \nonumber \\
		&= \frac{\prod_{i=k-m}^k{i}}{(k-1)!}{me^{-pk}}  
		\leq \frac{m}{(m-1)!}k^m{e^{-pk}}\nonumber,  
	\end{align}
	{where the transitions rely on (i) $m\ll k$, (ii)  $p<\frac{1}{2}$,
	and (iii)  $(1-p)\leq e^{-p}$.}

	As $p,m$ are fixed and independent of $k$, the expression $\frac{1}{(m-1)!}k^m m{e^{-pk}}$ decays to zero exponentially with $k$.	Therefore, \grrt with uniform samples is probabilistically complete.
\end{proof}

\section{Probabilistic completeness of \rrt under differential constraints}
\label{sec:kinodyn}
We begin by formulating the kinodynamic problem. Our assumptions on the robotic system and the environment as well as the definitions appear in Subsection~\ref{ssec:dyn_prelim} and are adapted from Li et al.~\cite{LiETAL16}.
Next, we describe the  modifications to \rrt required for solving the kinodynamic problem.  Finally, in Subsection~\ref{ssec:dyn_proof}, we devise a novel PC proof for the kinodynamic \rrt.

\subsection{Preliminaries}
\label{ssec:dyn_prelim}
We adapt the problem attributes introduced in the previous section to accommodate the more involved structure of the kinodynamic case. The state space $\X\subseteq \dR^d$ is a smooth $d$-dimensional manifold. Let $\F\subset \X$ denote the free state space.
As before, we assume that there exist $x_{\text{goal}}\in \X, \delta_{\text{goal}}>0$, such that $\X_{\text{goal}}=\B_{\delta_{\text{goal}}}(x_{\text{goal}})$.

Let $\mathbb{U}\subseteq\dR^D$ denote the space of control vectors. The given system has differential constraints of the following form:
\begin{equation}
\dot{x}(t)= f(x(t), u(t)),\quad x(t)\in \X,\quad  u(t)\in \mathbb{U}.
\label{eq:dyn}
\end{equation} 

Trajectories under differential constraints are defined as follows.
\begin{definition}[]
	A valid trajectory $\pi$  of duration $t_\pi$ is a continuous function
	$\pi: [0, t_\pi ] \rightarrow \F$. A trajectory
	$\pi$ is generated by starting at a given state $\pi(0)$ and
	applying a control function $\Upsilon : [0,t_\pi ] \rightarrow \mathbb{U}$ by forward
	integrating Equation~\ref{eq:dyn}.
\end{definition}
\noindent Similar to prior work~\cite{LiETAL16}, we consider control functions that are  piecewise constant:

\begin{definition}[]
	A piecewise constant control function $\overline{\Upsilon}$  with resolution $\Delta t$ is the concatenation of constant control functions $\bar{\Upsilon}_i : [0, \Delta t ] \rightarrow u_i$, where $u_i\in \mathbb{U}$, and $1\leq i\leq k$, for some $k\in \mathbb{N}_{>0}$. 
\end{definition}
We assume that the system is Lipschitz continuous for both of its arguments. That is,
$\exists K_u, K_x > 0 $ s.t. $\forall\ x_0,x_1\in \X,u_0,u_1\in \mathbb{U}$:
\[\lVert f(x_0, u_0)- f(x_0, u_1)\rVert \leq K_u\lVert u_0-u_1\rVert,\]
\[\lVert f(x_0, u_0)- f(x_1, u_0)\rVert \leq K_x\lVert x_0-x_1\rVert.\]

We describe here the (kinodynamic) \rrt algorithm, based on~\cite{LaVKuf01}. 
\begin{algorithm}
	\caption{\tt RRT($x_{\text{init}}, \X_{\text{goal}}, k, T_{\text{prop}}, \mathbb{U}$)}
	\label{algRRTextendKINO}
	\begin{algorithmic}[1]
		\State{$\mathcal{T}.\text{init}(x_{\text{init}})$}
		\For {$i = 1 \text{ to } k$}
		\State $x_{\text{rand}}\gets$ RANDOM\_STATE()
		\State $x_{\text{near}}\gets \text{NEAREST\_NEIGHBOR}(x_{\text{rand}}, \mathcal{T} )$
		\State $t \gets $ SAMPLE\_DURATION($0, T_{\text{prop}}$)
		\State $u \gets $ SAMPLE\_CONTROL\_INPUT($\mathbb{U}$)
		\State  $x_{\text{new}} \gets$ PROPAGATE($x_{\text{near}}, u, t$)
		\If {COLLISION\_FREE($x_{\text{near}}, x_{\text{new}}$)} 
		\State{$\mathcal{T}.$add\_vertex($x_{\text{new}}$)}
		\State{$\mathcal{T}.$add\_edge($x_{\text{near}}, x_{\text{new}}$)}
		\EndIf
		\EndFor	
		\State 	\Return $\mathcal{T}$
	\end{algorithmic}
\end{algorithm}

The \rrt algorithm in dynamic settings with no BVP solver has the following 
inputs: start state $x_{\text{init}}$, goal region $\X_{\text{goal}}$, the number of iterations $k$, the maximal time duration for propagation $T_{\text{prop}}$, and the set of control inputs $\mathbb{U}$. Our proof below assumes that $T_{\text{prop}}$ is positive and independent of $k$.
 
Lines~5--7 in Algorithm~\ref{algRRTextendKINO} replace line~5 in Algorithm~\ref{algRRTextend}.
Here, a random time duration $t$ is chosen between $0$ and $T_{\text{prop}}$ as well as a random control input $u\in \mathbb{U}$.
The algorithm uses a forward propagation approach (function PROPAGATE) from $x_{\text{near}}$: control input $u$ is applied for time duration $t$, reaching a new state $x_{\text{new}}$. Finally, if the trajectory from $x_{\text{near}}$ to $x_{\text{new}}$ is collision-free, then $x_{\text{new}}$ is added to $\T$ together with a connecting edge to $x_{\text{near}}$.

\subsection{Probabilistic completeness proof}
\label{ssec:dyn_proof}
We prove that \rrt for a system with dynamics satisfying the aforementioned characteristics is PC. To do so, we start by proving three lemmas. 
The following lemma, which is an extension of Theorem~15 from~\cite{LiETAL16}, bounds the distance between the endpoints of two trajectories with similar control inputs and initial positions, for the same duration. 
\begin{lemma}
Let $\pi,\pi'$ be two trajectories, with the corresponding control functions $\Upsilon(t),\Upsilon'(t)$. Suppose that $x_0 = \pi(0), x_0' = \pi'(0)$. 
Let $T>0$ be a time duration such that for all $t\in[0,T]$ it holds that 
$\Upsilon(t) = u, \Upsilon'(t) = u'$. That is, $\Upsilon, \Upsilon'$ remain fixed throughout $[0,T]$. Then 
\[ \|\pi(T) -\pi'(T)\|\leq e^{K_x T} \Delta x + K_u Te^{K_x T}\Delta u,\]
where $\Delta x = \Vert x_0 - x_0'\Vert$ and $\Delta u = \Vert u - u'\Vert$. 
\label{lem:bound_on_endpoints}
\end{lemma}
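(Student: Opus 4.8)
The plan is to reduce the claim to a standard Grönwall (integral-inequality) argument. Since $f$ is Lipschitz in both of its arguments and the controls are held constant on $[0,T]$, the trajectories $\pi,\pi'$ are the unique solutions of $\dot x = f(x,u)$ and $\dot x = f(x,u')$ on $[0,T]$ with the respective initial conditions, and in particular they are absolutely continuous, so the integral forms $\pi(t) = x_0 + \int_0^t f(\pi(s),u)\,\mathrm{d}s$ and $\pi'(t) = x_0' + \int_0^t f(\pi'(s),u')\,\mathrm{d}s$ are legitimate. First I would set $g(t) := \|\pi(t)-\pi'(t)\|$, subtract the two integral equations, and take norms to obtain
\[ g(t) \;\le\; \|x_0 - x_0'\| + \int_0^t \bigl\|f(\pi(s),u) - f(\pi'(s),u')\bigr\|\,\mathrm{d}s . \]

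Next I would bound the integrand by inserting the cross term $f(\pi'(s),u)$ and invoking the two Lipschitz inequalities separately: $\|f(\pi(s),u)-f(\pi'(s),u)\| \le K_x\, g(s)$ and $\|f(\pi'(s),u)-f(\pi'(s),u')\| \le K_u\,\Delta u$. Combined with $\|x_0-x_0'\|\le\delta$ this yields the integral inequality
\[ g(t)\;\le\;\delta + K_u\,\Delta u\; t + K_x\int_0^t g(s)\,\mathrm{d}s, \qquad t\in[0,T]. \]
To solve it I would let $G(t)$ denote the right-hand side, observe that $g\le G$, $G(0)=\delta$, and $G'(t)=K_x\, g(t)+K_u\,\Delta u\le K_x\, G(t)+K_u\,\Delta u$; multiplying by the integrating factor $e^{-K_x t}$ and integrating over $[0,T]$ gives
\[ g(T)\;\le\; G(T)\;\le\; e^{K_x T}\delta + K_u\,\Delta u\,\frac{e^{K_x T}-1}{K_x}. \]
Finally, using the elementary inequality $e^{x}-1\le x\,e^{x}$ for $x\ge 0$ with $x=K_xT$ (recall $K_x>0$ by assumption) replaces $(e^{K_xT}-1)/K_x$ by $T e^{K_xT}$, producing exactly the stated bound $e^{K_xT}\delta + K_u\,T\,e^{K_xT}\,\Delta u$.

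Since this is essentially Theorem~15 of~\cite{LiETAL16} with the single perturbation there replaced by two simultaneous ones (a displacement of size $\delta$ of the initial state together with a discrepancy $\Delta u$ between the two constant controls), I do not anticipate a genuine obstacle. The only points that need a little care are the appeal to existence, uniqueness, and absolute continuity of $\pi,\pi'$ so that the integral form is valid, and the decision to argue with the integral inequality rather than differentiating $g(t)=\|\pi(t)-\pi'(t)\|$ directly, since the Euclidean norm is only Lipschitz and $g$ may have corners where $\pi(t)=\pi'(t)$ (one could instead run the same computation with upper Dini derivatives, or with $g^2$). One may keep the sharper constant $(e^{K_xT}-1)/K_x$ if desired, but relaxing it to $T e^{K_xT}$ matches the form in which the lemma is invoked later.
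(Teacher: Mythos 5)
Your proof is correct, but it takes a genuinely different route from the paper's. The paper follows Theorem~15 of Li et al.: it discretizes $[0,T]$ into $\ell$ Euler steps of length $h$, imports from the cited work the discrete recurrence bound $\Vert x_\ell - x_\ell'\Vert < (1+K_x h)^\ell\Vert x_0-x_0'\Vert + K_u T e^{K_x T}\Delta u$, bounds $(1+K_x h)^\ell < e^{K_x T}$, and then passes to the limit $h\to 0$ using convergence of Euler's method to the true solutions. You instead work directly with the integral form of the two ODEs, insert the cross term $f(\pi'(s),u)$ to split the integrand via the two Lipschitz constants, and close with a Gr\"onwall argument; the final relaxation $e^{K_xT}-1\le K_xT\,e^{K_xT}$ recovers exactly the stated constant. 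Your version is more self-contained (it does not lean on an external recurrence or on Euler-method convergence, which the paper invokes somewhat informally), it yields the slightly sharper intermediate constant $(e^{K_xT}-1)/K_x$, and your caution about not differentiating $\|\pi(t)-\pi'(t)\|$ pointwise is well placed. What the paper's discrete route buys is continuity with the framework of Li et al., whose Theorem~15 it is explicitly modifying, so the reader can check the claim against that reference with minimal new machinery. Both arguments rest on the same two Lipschitz hypotheses and the same triangle-inequality decomposition of $f(\pi,u)-f(\pi',u')$, so the substance is equivalent; only the analytic packaging differs.
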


\begin{proof}
From the Lipschitz continuity assumption and the triangle inequality, we have that 
\[\Vert f(x_0, u) - f(x'_0, u')\Vert \leq K_u \Delta u  + K_x \Delta x. \]
As in the proof of Theorem~15 in~\cite{LiETAL16}, we will use the Euler integration method to approximate the value of the trajectory $\pi$ at duration $T$.
We divide $[0,T]$ into $\ell\in \dN_{>0}$ pieces, each of duration $h$, i.e., $T = \ell\cdot h$.
Let $x_i, x_i'$ denote the resulting approximations of the trajectories $\pi, \pi'$ at duration $i\cdot h$. From Euler's method we have that
\[x_i = x_{i-1} + h\cdot f(x_{i-1}, u),\]
\[x_i' = x_{i-1}' + h\cdot f(x_{i-1}', u').\]
The proof in~\cite{LiETAL16} shows that 
\begin{equation}
\Vert x_\ell - x_\ell' \Vert < (1 +K_x h)^\ell \Delta x + K_u Te^{K_x T}\Delta u.
\end{equation}
Since 
$(1+K_x h)^\ell = (1+K_x T/\ell)^\ell < e^{K_x T}$ 
we have that 
\[ \Vert x_\ell - x_\ell' \Vert < e^{K_x T}\Delta x + K_u Te^{K_x T}\Delta u.\]

From the Lipschitz continuity assumption we have that the Euler integration method converges to the solution of the \emph{Initial value problem}. That is, 
$\forall 0<i\leq\ell$,
\[\lim_{\ell\to\infty,\: h\to 0,\: \ell h = T} {\|\pi(i\cdot h) -  x_i\|} = 0, \]
\[\lim_{\ell\to\infty,\: h\to 0,\: \ell h = T} {\|\pi'(i\cdot h) -  x_i'\|} = 0.\]
Therefore,
\[\Vert \pi(T)-  \pi'(T)\Vert \leq e^{K_x T} \Delta x + K_u Te^{K_x T}\Delta u.\]
\end{proof}

Next, we give a lower bound on the probability of a successful forward propagation step of \rrt (Algorithm~\ref{algRRTextendKINO}), from a given tree node, using a random control $u\in \mathbb{U}$ and a random duration $t\in T_{\text{prop}}$.
We note that our proof uses a construction similar to~\cite[proof of Theorem~17]{LiETAL16}.
\begin{lemma}
	 Let $\pi$ be a trajectory with clearance $\delta >0$, and  duration $\tau\leq T_{\text{prop}}$.
	 Suppose that the control function $\Upsilon$ is fixed  for all $t\in[0,\tau]$, i.e., $\Upsilon(t)=u\in \mathbb{U}$. 
	 Denote by $x_i,x_{i+1}$ the states  $\pi(0), \pi(\tau)$, respectively.
	 {Let $r_i, r_{i+1}\in \dR_{>0}$ , such that $r_{i+1} = 4e^{K_x \tau}\cdot r_i$ and $r_{i+1}\leq \delta$.}

	 Suppose that the propagation step begins at state $x_i' \in \B_{r_i}(x_i)$ and ends in $x_{i+1}'$. Then for any $\kappa\in (0,1], \epsilon_i \in (0,\kappa{r_{i+1}})$, we have that:
	
	    \[\rho_i:=\Pr[x'_{i+1}\in \B_{\kappa{r_{i+1}}} (x_{i+1})]\geq 
	    p_t\cdot  \frac{\zeta_{D}\cdot \max\left(\frac{(4\kappa-1)e^{K_x \tau} r_{i} - \epsilon_i}{K_u \tau e^{K_x \tau}},0\right)}{|\mathbb{U}|}, \]
	    where $\zeta_D$ is the Lebesgue measure of the unit ball in $\mathbb{R}^D$ and $0<p_t\leq 1$ is some constant. 
	  \label{lem:prop_bound}
\end{lemma}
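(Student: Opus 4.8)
The plan is to lower-bound the probability that a single propagation step, starting from a perturbed state $x_0' \in \B_\delta(x_0)$ and using a freshly sampled control and duration, lands inside the shrunken ball $\B_{\kappa\delta}(x_1)$. I would decompose this event into two independent sub-events governed by the two random draws in lines~5--6 of Algorithm~\ref{algRRTextendKINO}: (i) the sampled duration $t$ is ``close enough'' to $t_\pi$, and (ii) the sampled control $u'$ is ``close enough'' to the nominal control $u$. The factor $p_t$ in the claimed bound should absorb the probability of event~(i); since $t_\pi \le T_{\text{prop}}$ and the duration is sampled from $(0,T_{\text{prop}})$, there is a positive-measure set of good durations, so $p_t>0$ can be taken as a constant independent of $k$. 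Here one has to be slightly careful: Lemma~\ref{lem:bound_on_endpoints} as stated applies the \emph{same} duration $T$ to both trajectories, so to make event~(i) clean I would either (a) argue that since $\pi$ has clearance $\delta$, running the nominal control a little longer or shorter still keeps us near $x_1$, or (b) simply restrict attention to $t$ in a small interval around $t_\pi$ and fold the resulting end-point displacement into the $\epsilon$ slack. Approach (b) is cleanest: pick the interval of durations so that the extra drift caused by the duration mismatch is at most $\epsilon$, and let $p_t$ be the (positive, constant) probability mass of that interval.

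Conditioned on a good duration, I would then invoke Lemma~\ref{lem:bound_on_endpoints} with $T = t_\pi$ (or the good duration, treated as essentially $t_\pi$), start points $x_0, x_0'$ at distance $\le \delta$, and controls $u, u'$ at distance $\Delta u = \|u-u'\|$. The lemma gives
\[
\|x_1 - x_1'\| \;\le\; e^{K_x t_\pi}\delta \;+\; K_u t_\pi e^{K_x t_\pi}\,\Delta u .
\]
Accounting for the $\epsilon$ already spent on the duration mismatch, to guarantee $x_1' \in \B_{\kappa\delta}(x_1)$ it suffices that
\[
e^{K_x t_\pi}\delta + K_u t_\pi e^{K_x t_\pi}\,\Delta u \;\le\; \kappa\delta - \epsilon,
\qquad\text{i.e.}\qquad
\Delta u \;\le\; \frac{\kappa\delta - \epsilon - e^{K_x t_\pi}\delta}{K_u t_\pi e^{K_x t_\pi}} \;=:\; r_u .
\]
Thus whenever the sampled control falls in the ball $\B_{r_u}(u) \cap \mathbb{U}$, the step succeeds. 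Since the control is sampled uniformly from $\mathbb{U}$, the conditional success probability is at least $|\B_{r_u}(u)\cap \mathbb{U}| / |\mathbb{U}|$, which is at least $\zeta_D r_u^D / |\mathbb{U}|$ as long as $r_u \ge 0$ (and is vacuously the claimed $0$ lower bound when $r_u<0$). Multiplying by $p_t$ yields exactly the stated bound, with the $\max(\cdot,0)$ taking care of the degenerate regime where even $\Delta u = 0$ does not suffice.

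The main obstacle is the interplay between the random duration and Lemma~\ref{lem:bound_on_endpoints}: the lemma is stated for a common duration, whereas the algorithm samples an independent duration that need not equal $t_\pi$, so I must quantify the positional error from this mismatch and show it can be driven below $\epsilon$ by restricting to a positive-probability set of durations. This needs a bound on $\|f\|$ along the relevant portion of the trajectory (which follows from Lipschitz continuity and compactness, or can be taken as a standing assumption that $f$ is bounded on the reachable set), so that moving the stopping time by a small amount moves the endpoint by a controlled amount. A secondary, more routine point is verifying that $|\B_{r_u}(u)\cap\mathbb{U}| \ge \zeta_D r_u^D$ — this is immediate if $\mathbb{U}$ is full-dimensional and $r_u$ is small enough that the ball around the relevant control is contained in $\mathbb{U}$; otherwise one absorbs the shape constant of $\mathbb{U}$ near $u$ into $\zeta_D$ or assumes $u$ lies in the interior. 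I would state the boundedness-of-$f$ assumption explicitly (or note it follows from the earlier hypotheses) and then the rest is the chain of inequalities above.
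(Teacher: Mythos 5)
Your proposal is correct in substance and reaches the paper's bound by the same two-factor decomposition (good duration times good control) with Lemma~\ref{lem:bound_on_endpoints} as the engine and the same threshold $\Delta u < \frac{\kappa\delta-\epsilon-e^{K_x t_\pi}\delta}{K_u t_\pi e^{K_x t_\pi}}$. The genuine difference is in how the random duration is handled, which is the crux of the lemma. You condition on the sampled $t$ lying near $t_\pi$ and bound the displacement of the \emph{perturbed} trajectory's endpoint as the stopping time moves from $t$ to $t_\pi$, which forces you to assume (or derive) a uniform bound on $\|f\|$. The paper avoids this entirely: it declares a duration $t$ ``good'' whenever $\B_{\kappa\delta-\epsilon}(\pi(t))\subset\B_{\kappa\delta}(x_1)$, i.e., whenever the \emph{nominal} trajectory satisfies $\|\pi(t)-x_1\|\leq\epsilon$, and then applies Lemma~\ref{lem:bound_on_endpoints} at the \emph{common} time $t$ to compare $\pi_t(t)$ with $\pi(t)$ rather than with $x_1$; the triangle inequality through $\pi(t)$ finishes the argument. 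This way the $\epsilon$ slack absorbs the duration mismatch measured along $\pi$, so only continuity of $\pi$ is needed to get a positive-measure interval of good durations (of length $T_\kappa$, giving $p_t=T_\kappa/T_{\text{prop}}$), and no bound on $\|f\|$ enters. Your route is workable but strictly needs the extra boundedness hypothesis you flag; if you keep it, state that $\X$ and $\mathbb{U}$ are bounded so that Lipschitz continuity yields $\|f\|\leq M$ there. One further discrepancy: you lower-bound the control-sampling probability by $\zeta_D r_u^D/|\mathbb{U}|$, which is the correct volume of a $D$-ball, whereas the lemma statement (and the paper's proof) write $\zeta_D\cdot\max(r_u,0)/|\mathbb{U}|$ without the exponent; the two do not coincide for $D>1$, so as written you prove a different (for $r_u<1$, weaker) constant than the one stated. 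This appears to be a typo in the paper rather than an error on your side, and it is immaterial downstream since Theorem~\ref{kino_main_thm} only uses positivity of $\rho$, but you should reconcile the exponent, and also record your (correct) caveat that the ball $\B_{r_u}(u)$ must be intersected with $\mathbb{U}$, so $u$ should lie in the interior of $\mathbb{U}$ or the shape constant must be adjusted.
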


\begin{proof}
Consider a sequence of balls of radius $r' = \kappa r_{i+1}-\epsilon_i$,
such that (i) the center $c_t$ of each ball lies on $\pi$, that is, $c_t = \pi(t)$ for some duration $t\in [0,\tau]$, and (ii) $\B_{r'}(c_t)\subset \B_{\kappa r_{i+1}}(x_{i+1})$.
The centers of all such balls constitute a segment of the trajectory $\pi$ whose duration is $T_\kappa$. See Figure~\ref{fig:prop_lemma} for an illustration.
\begin{figure}
        \centering
	    \includegraphics[width=\columnwidth]{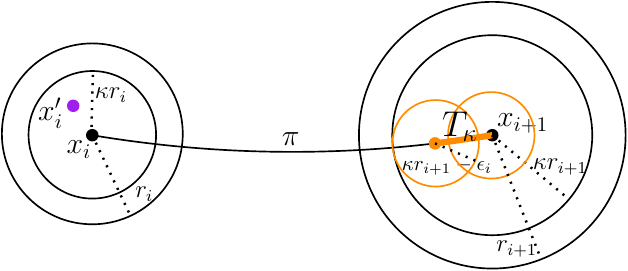}
		\caption{\textsf{Illustration of $T_\kappa$.
		  }}
		\label{fig:prop_lemma}
		\vspace{-10pt}
	\end{figure}

Fix $t\in [0,\tau]$, such that $\B_{r'}(c_t)\subset \B_{\kappa r_{i+1}}(x_{i+1})$. Additionally denote by $u_{\text{rand}}$ the random control generated by \rrt, and denote by $\pi_t$ the trajectory corresponding to the propagation step starting at $x'_i$, using the control $u_{\text{rand}}$ and duration $t$. By Lemma~\ref{lem:bound_on_endpoints}, we have that:
\[ \Vert\pi(t) -  \pi_t(t)\Vert < e^{K_x t} r_i + K_u te^{K_x t}\Delta u, \]
where $\Delta u=\|u-u_{\text{rand}}\|$. 
Now, we wish to find the value $\Delta u$ such that $\Vert\pi(t) -\pi_t(t)\Vert < \kappa r_{i+1}-\epsilon_i$, which would imply that $\pi_t(t)=x'_{i+1}\in \B_{\kappa r_{i+1}} (x_{i+1})$. Thus, we require that 
\[ e^{K_x t} r_i + K_u te^{K_x t}\Delta u < \kappa r_{i+1} - \epsilon_i. \]
As $r_{i+1}=4e^{K_x \tau}\cdot r_i$ the above constraint yields the condition 
\[ e^{K_x t} r_i + K_u te^{K_x t}\Delta u < \kappa \cdot 4e^{K_x \tau} r_i - \epsilon_i \]
which implies that 
\[\Delta u < \frac{(4\kappa e^{K_x \tau}-e^{K_x t}) r_i - \epsilon_i}{K_u t e^{K_x t}}.\]
To ensure that the bound holds for all possible durations $t$ in the relevant range, we should consider $t=\tau$, which is the maximal duration there, as the above expression is decreasing with $t$. That is, we enforce the following bound 
\[\Delta u < \frac{(4\kappa e^{K_x \tau}-e^{K_x \tau}) r_i - \epsilon_i}{K_u \tau e^{K_x \tau}} = \frac{(4\kappa-1) e^{K_x \tau} r_i - \epsilon_i}{K_u \tau e^{K_x \tau}}.\]

To summarize, we have shown that for certain values of $t$ and $u_{\text{rand}}$ it is guaranteed to have $x'_{i+1}\in \B_{\kappa r_{i+1}} (x_{i+1})$. 
It remains to calculate the probability of randomly choosing such  values. The probability for successful  propagation is  at least the  (a) probability of choosing a proper $t$ such that $\pi(t)$ is a center $c_t$ of a small ball $\B_{r'}(c_t)\subset  \B_{\kappa r_{i+1}}(x_{i+1})$ 
times the (b) probability for choosing a control input that will cause $\pi_t(t)$  to fall inside $\B_{r'}(c_t)\subset \B_{\kappa r_{i+1}}(x_{i+1})$. 

Clearly, the probability to choose a proper duration for propagation is at least $p_t = T_\kappa/T_{\text{prop}}>0$.
The probability\footnote{The maxima function guarantees that the probability will be valid, that is, at least 0.} to choose a proper control input is at least:
\[p_u=\frac{\zeta_{D}\cdot \max(\frac{(4\kappa-1) e^{K_x \tau}r_i - \epsilon_i}{K_u \tau e^{K_x \tau}},0)}{|\mathbb{U}|}.\]
Therefore, the probability for successfully propagating is at least $\rho_i = p_t\cdot p_u$.
\end{proof}
Finally, we prove a lower bound on the probability to grow the tree from a vertex in a certain ball.
\begin{lemma}
    Let $x\in \dR^d$ be such that $\B_r(x)\subset \F$.
    Suppose that there exists an \rrt vertex $v\in \B_{2r/5}(x)$.
    Let $x_{\text{near}}$ denote the nearest neighbor of $x_{\text{rand}}$ among all \rrt vertices (see Algorithm~\ref{algRRTextendKINO}).
    The probability that $x_{\text{near}}\in \B_{r}(x)$ is at least $|\B_{r/5}|/|\X|$.
    \label{lem:nn_prob}
\end{lemma}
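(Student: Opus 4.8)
The plan is to mirror the argument of Lemma~\ref{lem:geometric_step}: I will identify a small ball around $x$ such that, whenever the random sample $x_{\text{rand}}$ lands in it, the nearest neighbor $x_{\text{near}}$ is \emph{forced} to lie in $\B_\delta(x)$, and then I will lower-bound the probability of that sampling event. Concretely, I would condition on the event $x_{\text{rand}}\in\B_{\delta/5}(x)$ and argue deterministically from there, exactly as in the geometric case, only using the vertex $v$ (guaranteed to exist in $\B_{2\delta/5}(x)$) in place of the vertex $x'_i$.

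For the deterministic step: since $v$ is a vertex of $\T$, the defining property of the nearest neighbor gives $\|x_{\text{near}}-x_{\text{rand}}\|\le\|v-x_{\text{rand}}\|$. Two applications of the triangle inequality then yield $\|v-x_{\text{rand}}\|\le\|v-x\|+\|x-x_{\text{rand}}\|\le \tfrac{2\delta}{5}+\tfrac{\delta}{5}=\tfrac{3\delta}{5}$, and hence $\|x_{\text{near}}-x\|\le\|x_{\text{near}}-x_{\text{rand}}\|+\|x_{\text{rand}}-x\|\le\tfrac{3\delta}{5}+\tfrac{\delta}{5}=\tfrac{4\delta}{5}<\delta$, so $x_{\text{near}}\in\B_\delta(x)$. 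For the probabilistic step: $x_{\text{rand}}$ is drawn uniformly over $\X$ by \textsc{random\_state}, so $\Pr[x_{\text{rand}}\in\B_{\delta/5}(x)]=|\B_{\delta/5}(x)\cap\X|/|\X|$; because $\B_\delta(x)\subset\F\subseteq\X$ by hypothesis, we have $\B_{\delta/5}(x)\subseteq\B_\delta(x)\subseteq\X$, so this probability equals $|\B_{\delta/5}|/|\X|$. Combining, $\Pr[x_{\text{near}}\in\B_\delta(x)]\ge\Pr[x_{\text{rand}}\in\B_{\delta/5}(x)]=|\B_{\delta/5}|/|\X|$, as claimed.

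I do not anticipate a genuine obstacle here, since the argument is routine and essentially identical in spirit to Lemma~\ref{lem:geometric_step}. The one point requiring care is ensuring that $\B_{\delta/5}(x)$ is contained in the sampling domain $\X$, so that the Lebesgue-measure estimate is an equality rather than needing a separate bound on the fraction of the ball inside $\X$; this is exactly what the assumption $\B_\delta(x)\subset\F$ buys us. It is worth noting that, unlike in the geometric setting, this lemma only controls \emph{where} $x_{\text{near}}$ lies and says nothing about the admissibility of the subsequent forward-propagation step from $x_{\text{near}}$; that part is supplied separately by Lemma~\ref{lem:prop_bound}, and the two estimates (together with Lemma~\ref{lem:bound_on_endpoints}) will be chained in the proof of the theorem.
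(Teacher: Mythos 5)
Your proposal is correct and follows essentially the same route as the paper: condition on $x_{\text{rand}}\in\B_{\delta/5}(x)$, use the nearest-neighbor inequality $\|x_{\text{near}}-x_{\text{rand}}\|\le\|v-x_{\text{rand}}\|\le 3\delta/5$ and the triangle inequality to force $x_{\text{near}}\in\B_\delta(x)$, then lower-bound the sampling probability by $|\B_{\delta/5}|/|\X|$. If anything, your direct bound on $\|x_{\text{near}}-x\|$ is slightly cleaner than the paper's case analysis involving a hypothetical vertex $z\notin\B_\delta(x)$, but the two arguments are the same in substance.
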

\begin{proof}
    Suppose that there exists an \rrt vertex $z\not\in \B_{r}(x)$, as otherwise it is immediate that  $x_{\text{near}}\in \B_r(x)$. We show that if $x_{\text{rand}}\in \B_{r/5}(x)$ then $x_{\text{near}}\in \B_r(x)$. See Figure~\ref{fig:NN_lemma} for an illustration of the proof.
    \begin{figure}
        \centering
	    \includegraphics[width=0.5\columnwidth]{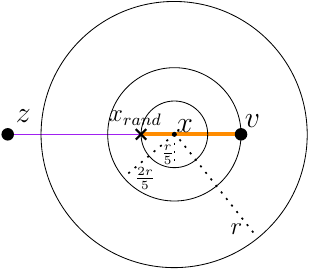}
		\caption{\textsf{Illustration of the proof of Lemma~\ref{lem:nn_prob}.
		        $z,v$ are \rrt vertices. $x_{\text{rand}}$ is the sampled state.
		        Its nearest neighbor will be a vertex in $\B_{r}(x)$.
		  }}
		\label{fig:NN_lemma}
	\end{figure}
    
    Observe that $\|x_{\text{rand}} - v\| \leq 3r/5$ and $\|x_{\text{rand}}-z\|>4r/5$. Thus, $v$ is closer to $x_{\text{rand}}$ than $z$ is, implying that $z$ will not be reported as the nearest neighbor of $x_{\text{rand}}$. 
    If $x_{\text{near}}\neq v$, then there must be another \rrt vertex $y\in \B_{ 3r/5}(x_{\text{rand}})\subset \B_{r}(x)$ such that $\Vert y-x_{\text{rand}}\Vert$ is minimal. 
    Finally, the probability to choose $x_{\text{rand}}\in \B_{r/5}(x)$ is $|\B_{r/5}|/|\X|$.
\end{proof}

Now we are ready to prove our main theorem.
\begin{theorem}
		Suppose that there exists a valid trajectory $\pi$ from $x_{\text{init}}$ to $x_{\text{goal}}$ lying in $\F$, with clearance $\delta_{\text{clear}}>0$.
		Suppose that the trajectory $\pi$ has a piecewise constant control function. 
		Then the probability that \rrt fails to reach $\X_{\text{goal}}$ from $x_{\text{init}}$ after $k$ iterations is at most $a'e^{-b'k}$, for some constants $a',b'\in \dR_{>0}$. 
		\label{kino_main_thm}
\end{theorem}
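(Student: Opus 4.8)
The plan is to follow the skeleton of the proof of Theorem~\ref{thm_main}, but to replace the tiling of the reference trajectory by equal balls with a tiling by \emph{growing} balls, since forward propagation of a random control inevitably inflates a ball by a factor $\geq e^{K_x t}>1$ over a step of duration $t$. First I would use the piecewise-constant assumption to write $\pi$ as a concatenation of finitely many sub-trajectories $\pi_1,\dots,\pi_m$, where $\pi_{j+1}$ is produced by a single constant control $u_{j+1}\in\mathbb{U}$ applied for a duration $t_{j+1}\in(0,T_{\text{prop}}]$ (split each constant-control piece of $\pi$ further whenever its length exceeds $T_{\text{prop}}$, and one may subdivide more finely at will). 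Let $x_0=x_{\text{init}},x_1,\dots,x_m=x_{\text{goal}}$ be the induced waypoints; each $x_j$ lies on $\pi$, so $\B_{\delta_{\text{clear}}}(x_j)\subseteq\F$, and $m$ is a constant depending only on $\pi$, $\Delta t$ and $T_{\text{prop}}$, not on $k$.

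Next I would fix a schedule of radii. Put $\delta^*:=\min\{\delta_{\text{clear}},\delta_{\text{goal}}\}$. For $0\leq j<m$ choose $\kappa_j>e^{K_x t_{j+1}}$ (e.g.\ $\kappa_j=e^{K_x t_{j+1}}+1$) and a small $\epsilon_j\in(0,\delta_j)$, and set $\delta_{j+1}:=\tfrac{5}{2}\kappa_j\delta_j$; finally pick $\delta_0>0$ small enough that $\delta_m=\big(\prod_{j=0}^{m-1}\tfrac{5}{2}\kappa_j\big)\delta_0\leq\delta^*$. Since $m$ and the $\kappa_j$ are fixed, $\delta_0$ is a fixed positive constant, $\delta_0\leq\delta_1\leq\dots\leq\delta_m\leq\delta^*$, hence every ball $\B_{\delta_j}(x_j)$ lies in $\F$ while $\B_{2\delta_m/5}(x_{\text{goal}})\subseteq\X_{\text{goal}}$.

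The heart of the argument is a one-iteration advance bound. Assume that at the start of some iteration $\T$ contains a vertex inside $\B_{2\delta_j/5}(x_j)$ for some $j<m$. By Lemma~\ref{lem:nn_prob} with $\delta=\delta_j$, with probability at least $|\B_{\delta_j/5}|/|\X|$ the sample $x_{\text{rand}}$ has its nearest neighbour $x_{\text{near}}$ in $\B_{\delta_j}(x_j)$. Conditioned on that, I would apply Lemma~\ref{lem:prop_bound} to $\pi_{j+1}$ with starting perturbation at most $\delta_j$, inflation $\kappa_j$ and parameter $\epsilon_j$ (here $\kappa_j>1$, which is within the scope of the argument proving Lemma~\ref{lem:prop_bound}): since $\kappa_j>e^{K_x t_{j+1}}+\epsilon_j/\delta_j$, the bound $\rho_j>0$ it gives is strictly positive, so with probability at least $\rho_j$ the propagated state $x_{\text{new}}$ lies in $\B_{\kappa_j\delta_j}(x_{j+1})\subseteq\B_{2\delta_{j+1}/5}(x_{j+1})$. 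Moreover the whole propagated arc is collision-free: applying Lemma~\ref{lem:bound_on_endpoints} with duration $s$ for each $s\leq t_{j+1}$ shows it stays within $e^{K_x s}\delta_j+K_u s\,e^{K_x s}\Delta u$ of $\pi$, a quantity increasing in $s$ and bounded at $s=t_{j+1}$ by $\kappa_j\delta_j-\epsilon_j<\delta_m\leq\delta_{\text{clear}}$, so the arc never leaves the clearance tube of $\pi$ and the edge is added. Thus, conditioned on having reached $\B_{2\delta_j/5}(x_j)$, the probability of acquiring a vertex in $\B_{2\delta_{j+1}/5}(x_{j+1})$ in the next iteration is at least the constant $p:=\min_{0\leq j<m}\big(|\B_{\delta_j/5}|/|\X|\big)\rho_j>0$, independent of $k$.

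The conclusion is then identical to that of Theorem~\ref{thm_main}: $\T$ starts with $x_{\text{init}}\in\B_{2\delta_0/5}(x_0)$, the index of the furthest waypoint ball already containing a vertex is non-decreasing (\rrt never deletes vertices), and once it reaches $m$ the tree has a vertex in $\B_{2\delta_m/5}(x_{\text{goal}})\subseteq\X_{\text{goal}}$. Exactly as in the proof of Theorem~\ref{thm_main}, this describes $k$ Bernoulli trials with success probability at least $p$, of which $m$ successes suffice to reach the goal, so the failure probability after $k$ iterations is at most $\Pr[\binomial(k,p)<m]\leq\frac{m}{(m-1)!}k^m e^{-pk}\leq a'e^{-b'k}$ for suitable $a',b'>0$ (e.g.\ $b'=p/2$ and $a'=\frac{m}{(m-1)!}\sup_{k\geq1}k^m e^{-pk/2}<\infty$). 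The step I expect to be the main obstacle is the consistent choice of the radii $\delta_j$ and inflation factors $\kappa_j$: one must leave Lemma~\ref{lem:nn_prob} enough slack for the widening from $\B_{2\delta_j/5}(x_j)$ to $\B_{\delta_j}(x_j)$, simultaneously make each $\kappa_j$ large enough that Lemma~\ref{lem:prop_bound} is non-vacuous (this is precisely where forward propagation forces \emph{growing} rather than equal balls, unlike the geometric case), and still keep $\delta_m\leq\delta^*$ --- which is possible only because the total inflation $\prod_j\tfrac{5}{2}\kappa_j$ is a fixed constant, so taking $\delta_0$ correspondingly small costs nothing asymptotically. Verifying collision-freeness of the entire propagated arc, rather than only of its endpoint, via Lemma~\ref{lem:bound_on_endpoints}, is the other place that needs care.
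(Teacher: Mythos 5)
Your proposal is correct and, at the decisive step, genuinely departs from the paper's own proof. The paper covers $\pi$ by balls of a \emph{single} radius $\delta=\min\{\delta_{\text{clear}},\delta_{\text{goal}}\}$ spaced $\tau$ apart in time, fixes $\kappa=2/5$ in Lemma~\ref{lem:prop_bound}, and asserts that the positivity condition of Equation~\ref{eq:positive_dist}, $2\delta/5-\epsilon-e^{K_x\tau}\delta>0$, can be met by taking $\tau$ small; since $e^{K_x\tau}\geq 1$ for every $\tau\geq 0$, that inequality in fact never holds, so the $\max(\cdot,0)$ in Lemma~\ref{lem:prop_bound} evaluates to $0$ and the paper's per-step bound is vacuous as written. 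You instead decouple the radius of the source ball from that of the target ball, taking $\kappa_j>e^{K_x t_{j+1}}>1$ and letting the radii grow geometrically via $\delta_{j+1}=\tfrac{5}{2}\kappa_j\delta_j$, with $\delta_0$ chosen small enough that $\delta_m\leq\min\{\delta_{\text{clear}},\delta_{\text{goal}}\}$; this makes the lower bound of Lemma~\ref{lem:prop_bound} strictly positive (its proof indeed nowhere uses $\kappa\leq 1$) and is precisely the repair the argument needs. The price is that $\delta_0$, and hence the per-step success probability $p$, shrinks exponentially in $m$ --- but $m$ is a constant of the instance, so the $a'e^{-b'k}$ conclusion survives unchanged. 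You are also more careful than the paper on two points: you verify collision-freeness of the \emph{entire} propagated arc (via monotonicity in $s$ of the bound of Lemma~\ref{lem:bound_on_endpoints}), which the paper omits even though COLLISION\_FREE must pass for the edge to be added; and you handle the cap $t_{j+1}\leq T_{\text{prop}}$ explicitly by subdividing long constant-control pieces. The remainder --- Lemma~\ref{lem:nn_prob} for the nearest-neighbour event and the Bernoulli/Markov-chain tail bound of Theorem~\ref{thm_main} --- matches the paper.
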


\begin{proof}
    Let $\tau\leq T_{\text{prop}}$ be a fixed duration for which there exists $\ell\in\dN_{>0}$  such that $\ell\cdot\tau = \Delta t$.
    
    We choose a set of  {times} $t_0=0, t_1, t_2, \ldots, t_m = t_\pi$, such that the difference between every two consecutive ones is $\tau$, where~$t_\pi$ is the duration of $\pi$.
    Let $x_0= \pi(t_0), x_1 = \pi(t_1), \ldots, x_m= \pi(t_m)$ be states along the path $\pi$ that are obtained after duration $t_0,t_1, \ldots , t_m$, respectively. That is, $x_i = \pi(t_i)$.
    Obviously, $m = t_\pi/\tau$ is some constant independent of the number of samples.
    
    We now place a set of $m+1$ balls centered at $x_0, \ldots, x_m$ such that the radius of the $i$th ball is $r_i = (4e^{K_x \tau})^i \cdot r_0$ 
    for $0\leq i\leq m$.
    Requiring that $r_m = \min\{\delta_{\text{goal}},\delta_{\text{clear}}\}$, we obtain a value for the smallest radius $r_0$.
    We show that given that an \rrt vertex in the $i$th ball exists, the probability $p_i$ that in the next iteration \rrt will generate a new vertex in the $(i+1)$st ball when propagating from a vertex in the $i$th ball is bounded from below by a positive constant. More accurately, we show that $p_i\geq p_0$, where $p_0$ is the probability that \rrt will generate a new vertex in $\B_{r_1}(x_1)$ when propagating from $x_0=x_{\text{init}}$ and it is positive.
	The rest of the proof is the same as that of Theorem~\ref{thm_main}.

	
	{Recall that Lemma~\ref{lem:prop_bound} shows a lower bound $\rho_i$ on the probability of a successful propagation between two consecutive balls of radii $r_i, r_{i+1}$ placed in~$x_i=\pi(t_i), x_{i+1} = \pi(t_{i+1})$, respectively, such that $t_{i+1}-t_i = \tau$. } 
	Assign $\kappa$ from Lemma~\ref{lem:prop_bound} the value  $2/5$ and fix $\epsilon_i=\kappa r_0=2r_0/5$ for all $0\leq i\leq m$ (note that $\epsilon_i\in (0,\kappa r_i)$, as required).
	Then $\rho_i>0$ for a duration $\tau$ if 	
	\begin{align}
	  \left(4\cdot\frac{2}{5}-1\right)e^{K_x \tau} r_i - \epsilon_i &= \frac{3}{5}e^{K_x \tau} r_i - \epsilon_i \\ &= \frac{3}{5}e^{K_x \tau} r_i - \frac{2r_0}{5}>0.
	  \label{eq:positive_dist}
	\end{align} 
	If the above expression is satisfied for $i=0$ then it also must hold for $1\leq i\leq m$ as $r_i>r_0$. Since $e^{K_x\tau}\geq 1$ for any $\tau\geq 0$ it must follow that 
	\[\frac{3}{5}e^{K_x \tau} r_0 - \frac{2r_0}{5} \geq \frac{3}{5}r_0 - \frac{2}{5}r_0 = \frac{r_0}{5}>0.\]
	Moreover,  we may set $\tau\leq T_{\text{prop}}$ 
	such that there exists $\ell\in\dN_{>0}$  for which $\ell\cdot\tau = \Delta t$ holds.
    
    Suppose that there exists an \rrt vertex $v\in \B_{2 r_i/5}(x_{i})\subset \B_{r_i}(x_{i})$. 
    We need to bound the probability $p_i$ that in the next iteration the \rrt tree will grow from  an \rrt vertex in $\B_{r_{i}}(x_{i})$, given that an \rrt vertex in $\B_{2{r_i}/5}(x_{i})$ exists, and that the propagation step will add a vertex to $\B_{2{r_{i+1}}/5}(x_{i+1})$. 
    That is, $p_i$ is the probability that in the next iteration both $x_{\text{near}}\in \B_{r_i}(x_{i})$ and $x_{\text{new}}\in \B_{2{r_{i+1}}/5}(x_{i+1})$.
   
    From Lemma~\ref{lem:nn_prob}, we have that the probability $q_i$ that $x_{\text{near}}$ lies in $\B_{r_i}(x_{i})$, given that there exists an \rrt vertex in $\B_{2r_i/5}(x_{i})$, is at least $|\B_{r_i/5}|/|\X|$.
    Now, since $r_i\geq r_0$ for $0\leq i\leq m-1$, we have that $q_i\geq q_0 >0$.
    From Lemma~\ref{lem:prop_bound} we have that 
    the probability for $x_{\text{new}}\in \B_{2r_{i+1}/5}(x_{i+1})$ is at least some positive constant $\rho_i>0$. Moreover, it holds that $\rho_i \geq \rho_0$ for $0\leq i\leq m-1$.
	Hence, for all $0\leq i\leq m-1$  it holds that $p_i\geq p_0$, where $p_0 = q_0\cdot \rho_0 >0$. The rest of the proof is the same as that of Theorem~\ref{thm_main}.
\end{proof}

\section{Discussion}
\label{sec:future}
Although our proofs assume uniform samples, they can be easily extended to samples generated using a Poisson point process, which is preferable in certain settings~\cite{KF11, SK18}. 
An immediate extension of this work
is to verify whether our proofs hold when other sampling distributions are considered, e.g., Halton sequences (see~\cite{JanIP18}).

Another possible direction is to further relax some of the assumptions made for kinodynamic systems, such as Lipschitz continuity. Additionally, the work raises the following challenging research question: Is it possible to extend these proofs that have a reduced set of assumptions to other sampling-based planners~\cite{HLM97}, or informed variants of \rrt.

{Finally, we mention that the following variants of \rrt are not addressed in the current paper, or in the work of Kunz and Stilman~\cite{KunzS14}: (i) random time + best-control input; (ii) fixed time + random control; (iii) random time larger than a fixed threshold + random or best control. Whether these variants are indeed probabilistically complete remains as a question for future research.}

\section*{Acknowledgements}
The authors thank Albert Wu and Thomas Lew for spotting an error in a previous version of the proof of Theorem~2.

\bibliographystyle{IEEEtran}
\bibliography{bibliography}

\end{document}